\definecolor{linkcolor}{HTML}{ED1C24}
\definecolor{my_blue}{HTML}{00B0F0}
\definecolor{my_purple}{HTML}{7030A0}
\definecolor{my_light}{HTML}{CAB2D6}
\definecolor{citecolor}{HTML}{0071BC}
\definecolor{tableau_blue}{HTML}{1F77B4}
\definecolor{tableau_orange}{HTML}{FF7F0E}
\definecolor{tableau_green}{HTML}{2CA02C}
\definecolor{tableau_red}{HTML}{D62728}
\newcommand{\myred}[1]{\textbf{\color{tableau_red}\textbf{{#1}}}}
\newcommand{\green}[1]{\textbf{\color{tableau_green}\textbf{{#1}}}}
\newcommand{\blue}[1]{\textbf{\color{tableau_blue}\textbf{{#1}}}}
\newcommand{\orange}[1]{\textbf{\color{tableau_orange}\textbf{{#1}}}}
\newtheorem{lemma}{Lemma}
\newtheorem{theorem}{Theorem}
\newtheorem{theorem*}{Finding}
\newtheorem{proposition}{Proposition}
\newtheorem{definition}{Definition}
\newtheorem{condition}{Condition}
\newcommand{\Sb}{\boldsymbol{\Sigma}_B}
\newcommand{\Sw}{\boldsymbol{\Sigma}_W}
\newcommand{\NCRLTerm}{Action Collapse}
\newcommand{\ncrlTerm}{Action Collapse}
\newcommand{\headterm}{action selection layer}
\newcommand{\headact}{state-action}
\newcommand{\ours}{ACPG}
\title{Imitate Optimal Policy: Prevail and Induce \\Action Collapse in Policy Gradient}
\author{
\textbf{Zhongzhu Zhou}$^{1,3}$, 
\textbf{Yibo Yang}$^{2}$, 
\textbf{Ziyan Chen}$^{1}$, 
\textbf{Fengxiang Bie}$^{1}$, 
\textbf{Haojun Xia}$^{1}$\\
\textbf{Xiaoxia Wu}$^{3}$, 
\textbf{Robert Wu}$^{3}$, 
\textbf{Ben Athiwaratkun}$^{3}$, 
\textbf{Bernard Ghanem}$^{2}$, 
\textbf{Shuaiwen Leon Song}$^{1,3}$\\
$^{1}$ University of Sydney,
$^{2}$ King Abdullah University of Science and Technology,
$^{3}$ Together AI
}
\begin{document}

\maketitle
\begin{abstract}
Policy gradient (PG) methods in reinforcement learning frequently utilize deep neural networks (DNNs) to learn a shared backbone of feature representations used to compute likelihoods in an \headterm. 
Numerous studies have been conducted on the convergence and global optima of policy networks, but few have analyzed representational structures of those underlying networks.
While training an optimal policy DNN, we observed that under certain constraints, a gentle structure resembling neural collapse -- which we refer to as \textit{\ncrlTerm} (AC) -- emerges.
This suggests that
1) the \headact\ activations (i.e. last-layer features) sharing the same optimal actions collapse towards those optimal actions' respective mean activations;
2) the variability of activations sharing the same optimal actions converges to zero;
3) the weights of \headterm\ and the mean activations collapse to a simplex equiangular tight frame (ETF).
Our early work showed those aforementioned constraints to be necessary for these observations.
Since the collapsed ETF of optimal policy DNNs maximally separates the pair-wise angles of all actions in the \headact\ space, we naturally raise a question:
\textit{can we learn an optimal policy using an ETF structure as a (fixed) target configuration in the \headterm?}
Our analytical proof shows that learning activations with a fixed ETF as \headterm\
naturally leads to the \NCRLTerm.
We thus propose the \textit{\NCRLTerm\ Policy Gradient} (\ours) method, which accordingly affixes a synthetic ETF as our \headterm.
\ours\ induces the policy DNN to produce such an ideal configuration in the \headterm\ while remaining optimal.
Our experiments across various OpenAI Gym environments demonstrate that our technique can be integrated into any discrete PG methods and lead to favorable reward improvements more quickly and robustly.
\end{abstract}    
\section{Introduction} \label{sec:INTRO}

Reinforcement learning (RL) involves training agents to interact with environments by selecting actions aiming to maximize expected cumulative rewards or ``returns''. Classical methods such as Q-learning~\cite{sutton1984on} and SARSA~\cite{watkins1989learning}
do this by learning a accompanying value function.
In contrast, ``policy gradient'' (PG) algorithms ~\cite{sutton1999policy} directly optimize the action selection layer~\footnote{The last layer of PG DNNs that computes the likelihoods for each action, a.k.a. the action classifier/head.} to maximize return by modeling action probabilities from state features. Modern PG methods use deep neural networks (DNNs) to estimate these probabilities.

Recent breakthroughs in artificial intelligence---such as GPT-4~\cite{peng2023instruction}---have been driven by large language models (LLMs), with much of their later success attributed to RL. Reinforcement learning from human feedback (RLHF)~\cite{RLHF, sriyash2024, dai2024safe} introduced a new paradigm of LLM post-training, many examples of which incorporate PG, such as TRPO~\cite{silver2014deterministic,schulman2015trust}, PPO~\cite{schulman2017proximal}, and DPO~\cite{rafailov2023direct}. 



Despite these empirical successes in RL, a substantial gap persists between the theory and practice of deep RL. Theoretical research in RL has primarily focused on the convergence of RL algorithms~\cite{gaur2023global, fan2020theoretical, asadi2023td, qlearning_analysis} and understanding how to effectively train an optimal policy~\cite{xiong2022deterministic}~\footnote{The optimal policy is defined as a policy which outputs the optimal action and achieves the highest return.}. Other studies attempt to tackle the exploration–exploitation dilemma which requires policy networks to strike a balance between exploring previously unvisited states in the experience replay buffer~\cite{lin1992self} and exploiting existing policy efficiently.
Recent works have motivated policies, either explicitly or implicitly~\cite{thrun1992efficient}, to take exploratory actions in order to discover new outcomes, even in the absence of immediate rewards or theoretical guarantees of convergence~\cite{deterministic, strehl2005theoretical}. However,
it is still often unclear how an optimal policy is to be derived, and what underlying DNN structures would achieve such optimality.

A set of phenomena known as ``Neural Collapse'' (NC) ~\cite{papyan2020prevalence}, characterizes the structure of optimal classification DNNs trained to convergence using canonical objectives such as cross-entropy (CE)~\cite{papyan2020prevalence} or mean square error (MSE)~\cite{han2021neural}.
Several works have demonstrated that 1) the last-layer features of data samples to collapse to their respective class means; 2) those means construct a simplex equiangular tight frame (ETF)~\cite{ETF}; 3) classifier weights also converge to this ETF; and 4) classification decisions converge to a near class centre classifier.
Beyond its elegant geometric symmetry, the theoretical breakthrough of NC has proven valuable in inspiring new training methods for classification models, particularly under conditions of data imbalance and scarcity \cite{xie2023neural,yang2023neural}.

PG methods share key similarities with classification tasks. For instance, PG is applied to the classification of positive and unlabeled data, where a policy DNN is trained to infer data labels ~\cite{li2019learning}. A notable example is Goal-Conditioned Supervised Learning (GCSL), which reformulated RL states and target rewards as input and the corresponding (optimal) action as label ~\cite{ghosh2021learning}. 
Based on the existence of NC and conceptual parallels between RL and classification, we aim to investigate the structures of optimal DNNs for PG. From the perspective of cost function formulation, PG is essentially CE weighted by a Q-value function~\cite{sutton1999policy}. Therefore, we empirically assess whether phenomena analogous to NC emerge during PG training in several RL environments.

A similar phenomenon---which we refer to as \textit{\NCRLTerm}---is observed in our ideal RL environments under certain constraints. However, real-world RL environments train DNNs within a limited time frame~\cite{zhang2019convergence}, resulting in data scarcity. RL samples \headact\ sets from the environment~\cite{sutton1999policy}, which may exhibit significant imbalance across different states due possibly to greedy exploitation and incomplete knowledge of the environment, and usually to the natural imbalance in the task at hand. Since \NCRLTerm\ is rarely observed in real-world problems, we further consider: \textbf{can we learn an optimal policy using an ETF structure as a (fixed) target configuration in the \headterm\ for realistic RL problems where ideal condition cannot be assumed?}

In this paper, we propose \NCRLTerm\ Policy Gradient (ACPG), which fixes the \headterm\ as a synthetic simplex ETF.
Our experiments demonstrate that ACPG achieves higher returns than previous PG methods, with faster and more robust convergence. In addition, our theoretical analysis shows that even in the absence of idealized empirical constraints, the activation means still collapse to a ETF under ACPG training. In other words, \NCRLTerm\ naturally develops in practical RL environments using our proposed technique. The main contributions of this work can be summarized as follows:

\begin{itemize}
\item To the best of our knowledge, this is the first study to empirically analyze the structure of optimal policy DNNs in fully explored RL environments, in which we observed the \NCRLTerm\ phenomenon (under certain constraints). We note that \NCRLTerm\ may not naturally occur in more realistic environments, due to factors such as insufficient exploration, imbalanced sampling, or the natural imbalance of states/actions.
\item To address this, we propose \ours, which initializes \headterm\ as a fixed simplex ETF structure. Accordingly, we analytically prove that \NCRLTerm\ optimality can still be induced in such realistic environments under \ours.
\item We validate \ours\ across \textbf{10+} Gym environments, \textbf{3+} different PG methods. Our proposed method can be integrated with any discrete PG algorithms, and consistently achieves more stable, faster, and robust convergence, leading to significant performance improvements. 
\end{itemize}

\section{Preliminaries} \label{sec:PRE}
\subsection{Policy Gradient} \label{sec:PG}

The goal of PG is to find an optimal policy, which in our case is determined by a trained DNN parameterized by $\theta$. $\pi_\theta(a \vert s)$ represents the probability of choosing action $a$ regarding environment observation $s$ under policy $\pi_\theta$~\footnote{The subscript for parameter $\theta$ is omitted for the policy $\pi_\theta$ when the policy is present in the superscript of other functions; for example, $d^{\pi}$ and $\Psi^\pi$ should be $d^{\pi_\theta}$ and $\Psi^{\pi_\theta}$ if written in full.}. The value from the reward function depends on the policy, and various algorithms~\cite{schulman2015trust, schulman2017proximal, cobbe2020phasic} can be applied to optimize $\theta$ to maximize the expected return value.
\begin{definition}[Reward/Objective Function of Policy Gradient Methods]
Normally an discrete RL environment contains a state set ${\mathcal{S}}$ and the discrete action space ${A} \text{, with } |A|=K$. We define the set as $\mathcal{S}_k = \{s:opt(s)=a_k\}$, where $opt(s)$ denotes the true optimal action for state $s$.

The reward function is defined as
\begin{equation}\label{eq:RLO}
J(\theta) 
= \sum_{s \in \mathcal{S}} d^\pi(s) \sum_{a \in \mathcal{A}} \pi_\theta(a \vert s) \Psi^\pi,
\end{equation}
where $d^\pi(s)$ is the stationary distribution of the Markov chain induced by $\pi_\theta$ (\textit{i.e.}, the on-policy state distribution under $\pi_\theta$), with $d^\pi(s) = \lim_{t \to \infty} P(S_t = s \vert S_0, \pi)$. $\Psi$ denotes the ``expected return'' when action $a$ is chosen at state $s$. ~\cite{sutton1999policy}
\end{definition}

In RL settings, $\pi_\theta(a \vert s)$ represents a stable probability distribution over actions given a state $s$ when following policy $\pi_\theta$.  The ``expected return'' $\Psi$ quantifies the future reward obtained by taking action $a$ in state $s$, which may vary across different algorithms. It may represent either 1) the remaining reward accumulation along a trajectory: $\sum_{t=\text{after } s,a}^{\infty} r_t$; or 2) the return following action $a_t$: $\sum_{t'=t}^{\infty} r_{t'}$, etc. For simplicity, we assume that a ground-truth value (optimal return) exists. 

According to the PG loss function while optimizing $J(\theta)$, after the environment is fully explored and experiences for all $\mathcal{S}$ and $\mathcal{A}$ are collected, $J(\theta)(\mathbf{H})$ can be expressed as:
\begin{equation} \label{eq:PGLOSS}
J(\theta)(\mathbf{H})
= \sum_{s \in \mathcal{S}}d^{\pi}(s) \cdot \log\frac{e^{h(s;\theta)^T w_{a^*} } }{\sum_{a \in \mathcal{A}} e^{h(s;\theta)^T w_a}} \Psi^\pi,
\end{equation}
where $\mathbf{H}=\{h(s;\theta) : s \in \mathcal S \}$ and $h(s;\theta)\in\mathbb{R}^d$ is the activation, 
$\mathbf{W}=[w_1;\cdots;w_K] \in\mathbb{R}^{K\times d}$ is the \headterm\, and $a^*$ is the optimal action predicted by the model. 

\subsection{Neural Collapse} \label{sec:ETF}

``Neural Collapse'' (NC) was observed and formalized in classification tasks~\cite{papyan2020prevalence}, where feature maps converge to their within-class means, which together with the classifier vectors, collapse to the vertices of a simplex ETF during the terminal phase of training (TPT). 
See Appendix \ref{app:NC} for details.

\begin{definition}[Simplex Equiangular Tight Frame] \label{eq:ETF}
	A collection of vectors $m_i\in\mathbb{R}^d$, $i=1,2,\cdots,K$, $d\ge K-1$, are said to be a simplex ETF if:
	\begin{equation}\label{eq:ETFM}
		\mathbf{M}=\sqrt{\frac{K}{K-1}}\mathbf{U}\left(\mathbf{I}_K-\frac{1}{K}\mathbf{1}_K\mathbf{1}_K^T\right),
	\end{equation}
	where $\mathbf{M} = [m_1, \cdots, m_K] \in \mathbb{R}^{d \times K}$, $\mathbf{U} \in \mathbb{R}^{d \times K}$ is an orthogonal matrix satisfying $\mathbf{U}^T \mathbf{U} = \mathbf{I}_K$, $\mathbf{I}_K$ is the identity matrix, and $\mathbf{1}_K$ is the all-ones vector.
\end{definition}

All vectors in a simplex ETF have unit $\ell_2$ norm and the same pair-wise angle,
\begin{equation} \label{eq:MIMJ}
	m_i^Tm_j=\frac{K}{K-1}\delta_{i,j}-\frac{1}{K-1}, \forall i, j\in[1,K],
\end{equation}
where $\delta_{i,j}$ equals to $1$ when $i=j$ and $0$ otherwise. The pair-wise cosine similarity $\frac{-1}{K-1}$ is the maximal equiangular separation of $K$ vectors in $\mathbb{R}^d$ \cite{papyan2020prevalence}. 






\section{Empirical Observation} \label{sec:EMP}
In this subsection, we present the main empirical evidence to show \NCRLTerm.


First, following Cliff Walking~\cite{demin2009cliff}, we define a similar environment: the agent navigates a \textbf{2x4} grid, where two exits are consistently placed in the upward and downward directions of the central \textbf{2x2} region of the grid. The diagram is shown in Fig.\ref{fig:EMP}.(a). We apply the REINFORCE algorithm~\cite{sutton1999policy} and estimate the returns via Monte-Carlo estimation~\cite{sutton1999policy} using episode samples to update the policy parameter $\theta$. 
We constrain the Cliff Walking environment using the following three conditions:

\begin{condition}
The state is fully explored and discrete, i.e., $max (|\mathcal{S}|)=12$ in our cliff walking. 
\end{condition}

\begin{condition}
The cardinality of each state set ${\mathcal{S}}_k$, as defined in DEF.1, remains consistent across different actions $a_k$ in experience replay, such that $|\mathcal{S}_k| = |\mathcal{S}_{k^\prime}|\ \forall k, k^\prime \in K$. Specifically, for the four directional actions in the experience replay of our ideal cliff walking, $\mathcal{S}_1=\mathcal{S}_2=\mathcal{S}_3=\mathcal{S}_4$.
\end{condition}

\begin{condition}
The sampling of states across different actions selected by any policy is expected to be with equal probability when stationary,
$\forall s, s^\prime \in \mathcal{S}\ ,\ d^\pi(s) = d^\pi(s^\prime)$.
\end{condition}

In addition to our Ideal Cliff Walking environment, the metrics of \NCRLTerm\ in three other Gym environments, Discrete-Car-Racing (Car-Racing)~\footnote{The reward of Car-Racing is normalized to 0-20 for visualization.}, Breakout-V5 (Breakout) and ALE/Pong-V5 (Pong)~\cite{brockman2016openai} are also investigated.

\textbf{Moment of policy DNNs convergence.} \label{sec:MOM}
During training, we periodically snapshot the parameters of the policy DNN at designated epochs. Given all unique states observed from an environment, we extract both the activations of the \headterm\ and the corresponding weight vectors. 

Let $h(s;\theta)$ --- henceforth shortened as $h$ --- denote the \headact\ activation. We compute the global activation mean $h_G \in \mathbb{R}^d$:
    $h_G \triangleq Avg_{s \in \mathcal{S}} \{h_{s} \}$, 
where $s$ and $\mathcal{S}$ are defined in Sec.\ref{sec:PRE}. For each action $a_k$, $k \in 1,\dots,K$, we define the mean activations of distinct states that share $a_k$ as the optimal action:
    $h_{k} \triangleq Avg_{i=1}^{n_k} \{ h_{i, k} \}, \quad k=1,\dots,K$,
where $n_k$ is the number of distinct \headact\ activations that share $a_k$ as the  optimal action.
Similarly, $w_{k}$ is weight of \headterm\ of the $k^{th}$ action. To measure angularity, given two distinct action indexes $k,{k^\prime}$, we denote 
$\cos_h(k,k^\prime) = \left\langle h_{k} - h_G, h_{{k^\prime}} - h_G \right\rangle / (||h_{k} - h_G||_2||h_{k^\prime} - h_G||_2)$ and 
$\cos_w(k, {k^\prime})= \left\langle w_{k}, w_{{k^\prime}}\right\rangle /(||w_{k}||_2||w_{{k^\prime}}||_2)$.

\begin{figure*}[htb]
	\begin{center}		\includegraphics[width=0.99\textwidth]{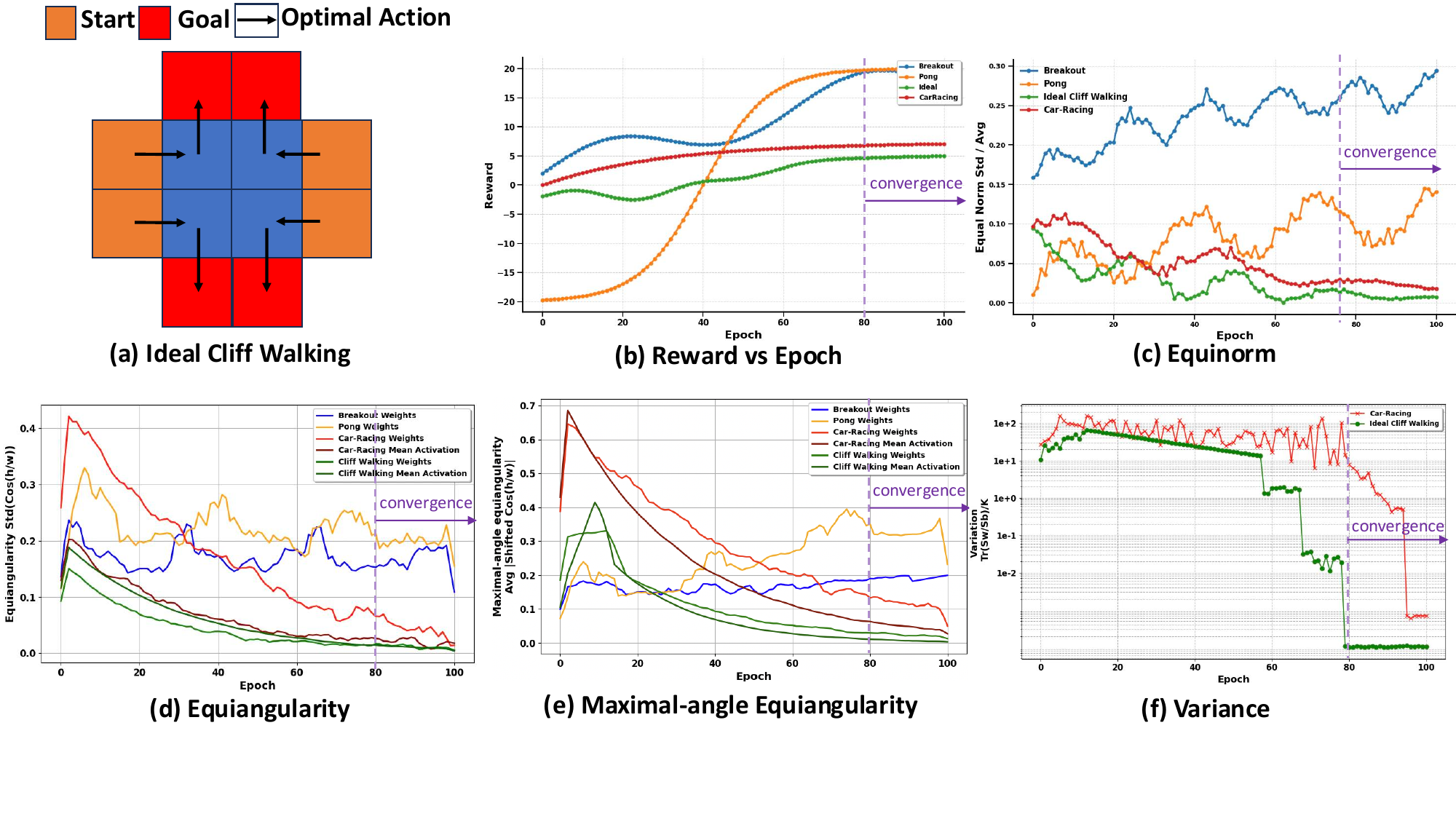}
            \caption{(a) Ideal environment diagram.  (b) Models converge to stability; $y$-axis shows reward accumulation per epoch. (c) Weights of \headterm\ become equinorm; $y$-axis shows $\text{Std}_{k}(\|w_{k}\|_2)/\text{Avg}_k(\|w_{k}\|_2)$. (d) Mean \headact\ activations and weights of \headterm\ approach equiangularity; $y$-axis shows $\text{Std}_{k,{k'}\neq k}(\cos_{h/w}(k, {k'}))$. (e) Mean \headact\ activations and weights of \headterm\ approach maximal-angle equiangularity; $y$-axis shows $\text{Avg}_{k,{k'}}|\cos_{h/w}(k, {k'})+ 1/(K-1)|$. (f) Variation of \headact\ activations sharing the same optimal action; $y$-axis shows $\Tr{\Sw\Sb^{\dagger}}/K$, where $\Tr{\cdot}$ is the trace operator, $\Sw$ is the covariance of $h_{k}$, $\Sb$ is the corresponding covariance between $h_{k}$, and $[\cdot]^{\dagger}$ is Moore-Penrose pseudo-inverse. In \green{Ideal Cliff Walking} and \myred{Car-Racing}, \NCRLTerm\ happens in each figures. However, realistic RL environments such as \blue{Breakout} and \orange{Pong} show ever-decreasing equinormness and equiangularity which shows that they fall into biased optimal and do not form \NCRLTerm.}
            
        \label{fig:EMP}
        \end{center}
        \vspace{-20px}
\end{figure*}

\textbf{Results and Discussion.} Our main findings and results are presented in Fig.\ref{fig:EMP}. \NCRLTerm\ is clearly observed in \green{Ideal Cliff Walking} and \myred{Car-Racing}, but not in real-world RL environments such as \blue{Breakout} and \orange{Pong}. In \green{Ideal Cliff Walking}, the near-zero values of equinorm, equiangularity, maximal-angle equiangularity, and angular variation indicate that both the \headact\ activations~\footnote{Because activations of Pong and Breakout is continuous and can be sampled in infinity times, mean of them are indeterminated value. We only show mean activations of Ideal Cliff Walking and Car-Racing.} and the \headterm\ weights form a simplex ETF. Based on empirical observations, we propose \NCRLTerm\ as following:
\begin{align}
    \mathbf{M^*} &:= \sqrt{\frac{K}{K-1}}\mathbf{U}\left(\mathbf{I}_K-\frac{1}{K}\mathbf{1}_K\mathbf{1}_K^T\right) \\
    \mathbf{H^*} &:= \mathbf{M^*} \\
    \mathbf{W^*} &:= \mathbf{M^*}
\end{align}
where $\mathbf{H^*}$ and $\mathbf{W^*}$ denote \textit{target} \headact\ activations and \textit{target} \headterm\ weights, respectively.
\section{Main Approach}
\subsection{\NCRLTerm\ Policy Gradient}
The empirical experiments detailed in Sec.\ref{sec:EMP} reveal that a gentle geometry, simplex ETF, emerges in ideal RL environments. Such a structure constitutes an optimal geometric structure for the \headterm\ of optimal trained policy DNNs. This observation thus motivates the ACPG technique, where we explore affixing a synthetic ETF as our \headterm\ and training our PG DNN from this initialization.


\begin{figure*}[!htb]
    \centering
    \includegraphics[width=\linewidth]{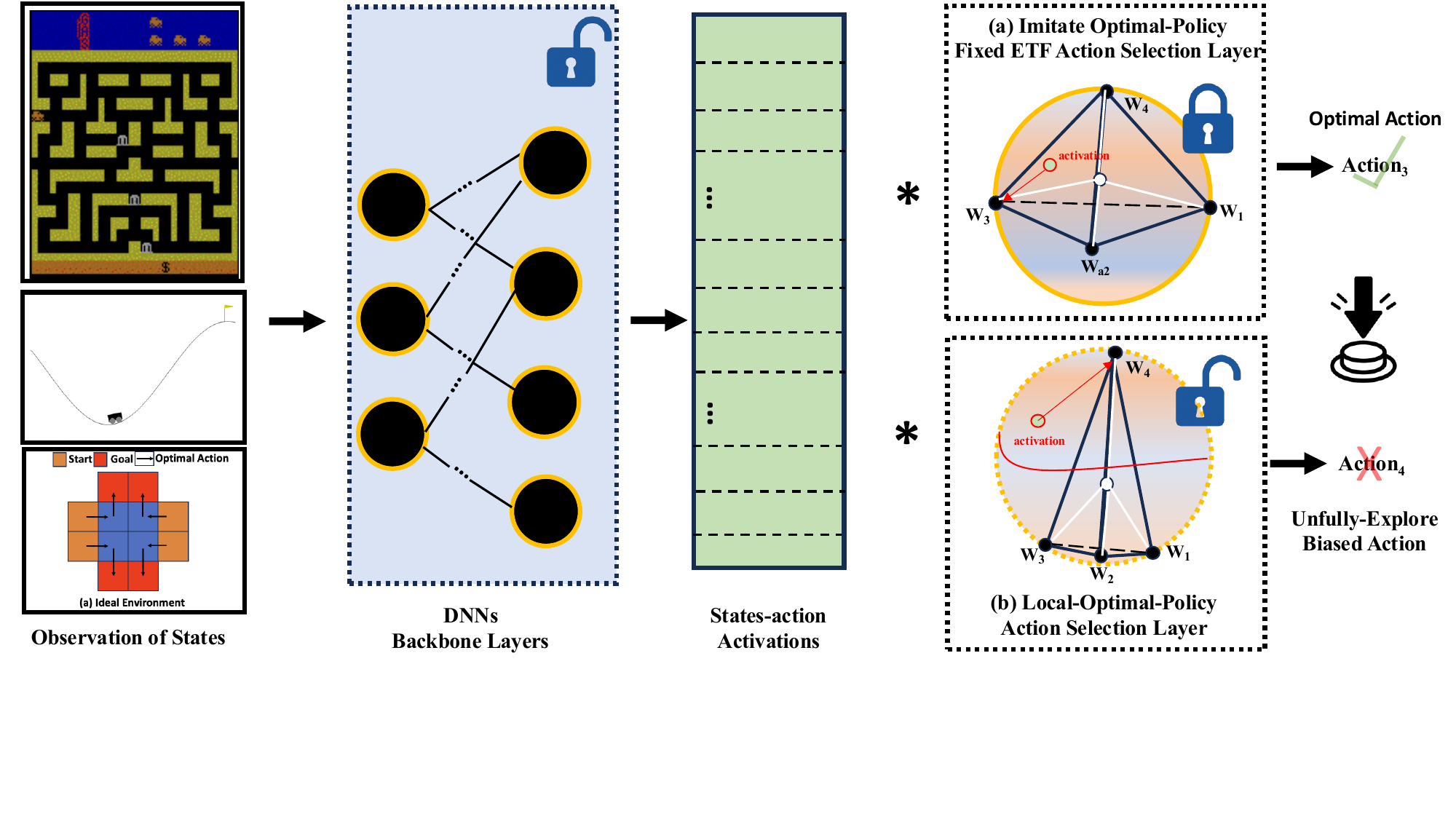}
    \caption{
    ACPG Framework. After observing states from the environment, the backbone layers can be trained to receive input states and the output hidden features aligned with any direction.
    \headterm\ is a random simplex ETF and is \blue{locked} during training.
    By contrast, normal DNNs employ a learnable \headterm\ as shown in (b).
    Illustration of equiangular separation (a) and non-equiangular learnable separation (b). 
    \NCRLTerm\ reveals (a), where the weights of \headterm\ and mean \headact\ activations collapse to a simplex ETF.
    As some states are less explored than others in realistic environments, their weights lie in a closer to the origin, and the separability for these weights and mean activations degrades, as illustrated in (b).
    We can also see in (b), the green circle is biased toward $w_{a_4}$, which should instead perform ${a_3}$ as in (a).
    }
    \label{fig:FM}
\end{figure*}

Specifically, we initialize the \headterm\ $\mathbf{W}$ as a randomly oriented simplex ETF according to Eq.\eqref{eq:ETFM}, scaled by a constant $\sqrt{E_\mathbf{W}}$ to enforce a fixed $\ell_2$ norm. During training, only \headact\ activations $\mathbf{H}$ are optimized. An overview of our method is illustrated in Fig.\ref{fig:FM}. This selection layer outputs the likelihood of actions, from which an action is sampled and applied to the environment\footnote{The agent's policy may not always select the highest-probability action. For instance, $\epsilon$-greedy exploration sometimes selects a random action to balance exploration and exploitation.}. Although fixing the \headterm\ as an ETF simplifies the learning problem, it brings theoretical merits and higher performance in our experiments. 

\subsection{Theoretical Support}
In this subsection, we further explore the theoretical aspects of our framework, demonstrating that even in complex, real-world RL environments, 
\NCRLTerm\ is also achievable using a fixed ETF \headterm. 
To facilitate tractable theoretical analysis, simplification is essential. The layer-peeled model (LPM), as proposed in~\cite{fang2021exploring}, offers an effective simplification by focusing solely on the \headterm\ while disregarding the variable weights of the backbone layers. 
To capture key learning behaviors of the PG, we assume the DNN is modeled as an LPM and $S$ is defined in Sec.\ref{sec:PRE}. We allow $\mathcal S_k \subseteq \mathcal{S}$ and the possibility that the sizes of various $\mathcal S_k$ can be wildly imbalanced. Moreover, the \headterm\ is fixed as a simplex ETF and only \headact\ activations $\mathbf{H}=\{h_s:s\in \mathcal{S}\}$ are optimized~\footnote{$h_{s_k}$ is the \headact\ activation whose optimal action is $a_k$.}. In this case, the $J(\mathbf{H})$ in Eq.\eqref{eq:PGLOSS} is represented as: 
\begin{align} \label{eq:LPNOW}
    \max_{\mathbf{H}}\quad & J(\mathbf{H})
= \sum_{s \in \mathcal{S}} d^{\pi}(s) \cdot \log \left(\frac{e^{h_{s}^T w_{a^*} } }{\sum_{a \in \mathcal{A}} e^{h_{s}^T w_a}}\right) \Psi^\pi, \\
    \text{s.t.} \quad & ||h_{s}||^2\le E_\mathbf{H}, \ \forall k, k'\in[1,K], s \in \mathcal{S},
\end{align}
where $a^*$ represents the optimal action of state $s$ and $d^{\pi}(s)$ represents the stationary distribution of the states. Note $\mathbf{W}^*=[w_{1}^*; \dots; w_{K}^*]$ is the \headterm\ as a simplex ETF with:
\begin{align} \label{eq:LPNOWCON}
	{w_k^*}^Tw^*_{k'}=E_\mathbf{W}\left(\frac{K}{K-1}\delta_{k,k'}-\frac{1}{K-1}\right),
        \quad \forall k, k'\in[1,K],
\end{align}
where $\delta_{k,k'}$ equals to $1$ when $k=k'$ and $0$ otherwise. 

Despite the state set $\mathcal{S}$ being partially explored and the subset of states $\mathcal S_k$ corresponding to each optimal action $k$ being imbalanced, Eq.\eqref{eq:LPNOW} still admits a global optimum, as shown in the following:
\begin{theorem} \label{the:ETF}
Whether or not 1) the states are fully explored; or 2) the action-state subsets $\mathcal S_k$ are mutually balanced; or 3) the action-state subsets $\mathcal S_k$ are fully-sized ($\mathcal S_k = \mathcal S$); any global minimizer $\mathbf{H}^*=[h^*_{s_k}: s_k \in \mathcal{S}]$ of Eq.\eqref{eq:LPNOW} converges to a simplex ETF with the same direction as $\mathbf{W}^*$ with a length of $\sqrt{E_\mathbf{H}}$, \emph{i.e.,}
\begin{equation} 
{h_{s_k}^*}^Tw^*_{k'}=\sqrt{E_\mathbf{H}E_\mathbf{W}}\left(\frac{K}{K-1}\delta_{k,k'}-\frac{1}{K-1}\right), \quad \forall\ 1\le k, k' \le K, s \in \mathcal{S}
\end{equation}
where $s_k$ represents a state $s \in \mathcal{S}_k$. This indicates that \ncrlTerm\ emerges irrespective of the aforementioned conditions. This can be proven by converting to a conditional Lagrangian problem and applying Karush-Kuhn-Tucker (KKT) conditions. Please refer to Appendix \ref{app:PROOF} for our proof.
\end{theorem}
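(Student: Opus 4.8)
The plan is to treat Eq.~\eqref{eq:LPNOW} as a constrained maximization over the activation matrix $\mathbf{H}$ with the $\ell_2$-ball constraints $\|h_s\|^2 \le E_{\mathbf{H}}$, and to characterize its maximizers via the Karush--Kuhn--Tucker conditions. First I would observe that the objective decouples across states: since $\mathbf{W}^*$ is fixed, $J(\mathbf{H}) = \sum_{s} d^\pi(s)\,\Psi^\pi \cdot \ell(h_s)$ where $\ell(h_s) = \log\!\big(e^{h_s^\top w^*_{a^*}}/\sum_a e^{h_s^\top w^*_a}\big)$ is the (negative) softmax cross-entropy of the single activation $h_s$ against its optimal-action target. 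Assuming $d^\pi(s)\,\Psi^\pi > 0$ for every observed $s$, each $h_s$ can be optimized independently, so it suffices to solve, for a generic state with optimal action index $k$,
\[
\max_{\|h\|^2 \le E_{\mathbf{H}}}\ \log\frac{e^{h^\top w^*_k}}{\sum_{a} e^{h^\top w^*_a}}.
\]
This reduces the theorem to a single per-state problem that does not see $|\mathcal{S}_k|$ at all — which is precisely why the conclusion is insensitive to exploration, balance, or full-sizedness of the $\mathcal S_k$; I would emphasize this decoupling as the conceptual heart of the argument.

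Next I would write the Lagrangian $\mathcal L(h,\lambda) = \ell(h) - \lambda(\|h\|^2 - E_{\mathbf{H}})$ with $\lambda \ge 0$, and compute $\nabla_h \ell(h) = w^*_k - \sum_a p_a(h)\,w^*_a$, where $p_a(h) = e^{h^\top w^*_a}/\sum_{a'} e^{h^\top w^*_{a'}}$ is the softmax probability. Stationarity gives $w^*_k - \sum_a p_a(h)\,w^*_a = 2\lambda h$. A short argument shows $\lambda > 0$ (the objective is strictly increasing in the direction $w^*_k - \tfrac1K\sum_a w^*_a \ne 0$, so the constraint is active), hence $\|h^*\|^2 = E_{\mathbf{H}}$. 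Now I exploit the ETF structure of $\mathbf{W}^*$: by Eq.~\eqref{eq:LPNOWCON}, $\sum_a w^*_a = 0$ (the ETF is centered), so $\sum_a w^*_a = \mathbf 0$ and the stationarity equation simplifies to $w^*_k - \sum_{a\ne k} p_a(h)\,w^*_a \ldots$ — more cleanly, using $\sum_a w^*_a=0$ one gets $\nabla_h \ell(h) = w^*_k - \sum_a p_a w^*_a$, and I would substitute the ansatz $h^* = \alpha\, w^*_k$ for a scalar $\alpha>0$. Under this ansatz, symmetry of the ETF forces $p_a(h^*)$ to take one value $p_k$ for $a=k$ and a common value $q$ for all $a \ne k$ (since $w^*_k{}^\top w^*_a$ is the same for every $a\ne k$), so the gradient lies along $w^*_k$ as well, and the stationarity equation becomes a scalar equation in $\alpha$. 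Matching norms, $\alpha = \sqrt{E_{\mathbf{H}}/E_{\mathbf{W}}}$, which immediately yields ${h^*_{s_k}}^\top w^*_{k'} = \alpha\, {w^*_k}^\top w^*_{k'} = \sqrt{E_{\mathbf{H}}E_{\mathbf{W}}}\big(\tfrac{K}{K-1}\delta_{k,k'} - \tfrac{1}{K-1}\big)$, the claimed identity; and since $\{h^*_k\}$ are then positive scalar multiples of the ETF $\{w^*_k\}$, they themselves form a simplex ETF of length $\sqrt{E_{\mathbf{H}}}$.

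The last and most delicate step is to show this stationary point is the \emph{unique} global maximizer, i.e.\ that no other $h$ with $\|h\|^2=E_{\mathbf{H}}$ does as well or better. The per-state objective $\ell(h)$ is concave in $h$ (log-sum-exp is convex, and $h\mapsto h^\top w^*_k$ is linear), and we are maximizing a concave function over a convex set, so any KKT point is a global maximizer; uniqueness of the \emph{direction} then follows because $\ell$ is strictly concave along any direction not parallel to the affine hull translate — more carefully, I would argue that on the sphere $\|h\|^2 = E_{\mathbf{H}}$ the maximizer of the linear-plus-concave objective is unique by a strict-concavity/strict-monotonicity argument, ruling out ties among the non-optimal coordinates. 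I expect this uniqueness/tie-breaking argument to be the main obstacle: one must confirm that the ETF geometry genuinely forces all $K-1$ competing logits to be equal at the optimum rather than merely admitting the symmetric solution as one of several. I would handle it by a perturbation/exchange argument — if two competitor inner products $h^\top w^*_a \ne h^\top w^*_{a'}$, averaging $h$ against the ETF symmetry (an orthogonal map permuting $a\leftrightarrow a'$ and fixing $k$) strictly increases $\ell$ by concavity, contradicting optimality — which pins down the symmetric configuration and completes the proof. Routine positivity assumptions on $d^\pi(s)\Psi^\pi$ and the case analysis for states with $\Psi^\pi = 0$ (which contribute nothing and leave the corresponding $h_s$ unconstrained by the objective, so one takes them to the ETF direction by convention) I would relegate to the appendix.
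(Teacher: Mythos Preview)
Your proposal is essentially the paper's proof: both decouple the objective per state, set up the Lagrangian/KKT for the single-$h$ problem over the $\ell_2$ ball, use the ETF centering $\sum_a w^*_a=0$, and verify that $h^*=\sqrt{E_{\mathbf H}/E_{\mathbf W}}\,w^*_k$ satisfies KKT with a strictly positive multiplier. The only cosmetic difference is that the paper first rewrites the per-state objective as the equivalent convex minimization $\min_h \sum_{j\ne k} e^{h^\top(w_j-w_k)}$ before invoking KKT, whereas you stay with the concave log-softmax; your added uniqueness discussion (the symmetry/exchange argument) actually goes beyond what the paper supplies, since the paper stops at ``KKT is sufficient under convexity'' without addressing whether other maximizers exist.
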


Recalling the center computation in Eq.\eqref{eq:RLO}, we analyze the gradient of $J(\theta)$ w.r.t. the activation vector $h_s$ (recall this is short for $h(s;\theta)$) for each state $s$. Then the gradient of $J(\theta)$ has:
\begin{align*}
    \nabla_\theta J(\theta) &\propto \sum_{s \in \mathcal{S}} d^\pi(s) \mathbb{E}_\pi [\psi^\pi \nabla_\theta \ln \pi_\theta(a \vert s)] = \sum_{s \in S} d^\pi(s) \left(w_{k^*} - \frac{\sum_{j=1}^K w_j e^{h_s^T w_j}}{\sum_{j=1}^K e^{h_s^T w_j}}\right) \nabla_\theta h_s \psi^\pi \\
    & = \underbrace{\sum_{s \in \mathcal{S}} d^\pi(s) \left(\psi^{+}w_{k^*}(1 - \pi_\theta(a_{k^*} \vert s)\right)\nabla_\theta h_s}_{\text{$a_{k^*}$ is optimal for $s$}} - \notag \quad \underbrace{\sum_{s \in \mathcal{S}} d^\pi(s) \sum_{k = 1, \neq k^*}^{K} \left(\psi^{-}w_k\pi_\theta(a_k \vert s)\right)\nabla_\theta h_s}_{\text{$a_k$ is not optimal for $s$}}\\
\end{align*}
where  $d^\pi(s)$ is stationary distribution of states. $\pi(a_k | s)$ is the policy probability of taking action $a_k$ at state $s$. As usual, Monte Carlo estimation is applied to approximate the formula inside the expected value. Note that $a_{k^*}$ indicates the optimal action for state $s$, which brings the highest return $\psi^+$, and other actions $a_k, k \neq k^*$ can bring lower return $\psi^-$.  

It reveals that the gradient can be decomposed into two parts: the ``optimal-action'' part pulls gradient towards optimal action, $w_{a_k^*}$; while ``taking other actions'' part contains negative terms and pushes the gradient to the suboptimal \headterm. This implies that once fully trained, the \headact\ activations will align with their corresponding \headterm, forming a simplex ETF.

\section{Experiments} \label{sec:EXP}
We investigate the performance improvement of \ours\ in Sec.\ref{sec:RES} and analyze how does the exploration affect \ours. Whether \NCRLTerm\ forms after applying \ours\ is shown in Sec.\ref{sec:ANA}.



\subsection{Main Results} \label{sec:RES}
\noindent\textbf{Comparative Evaluation.} In Tab.\ref{tb:CNN}, we summarize the comparison results for two classic environments: discrete Cart Pole (Car.) and discrete Car-racing (Rac.), three Atari environments~\cite{Bellemare_2013}: EnduroNoFrameskip-v4 (End.), QbertNoFrameskip-v4 (Qbe.) and PongNoFrameskip (Pon.) with four popular PG algorithms: REINFORCE~\cite{sutton1999policy}, TRPO~\cite{schulman2015trust}, PPO~\cite{schulman2017proximal} and A3C~\cite{mnih2016asynchronous}. To assess the generality and effectiveness of \ours, two classes of Policy DNNs, multilayer perceptron (MLP)~\cite{gardner1998artificial} and convolutional neural network (CNN)~\cite{albawi2017understanding} are employed. We applied \ours\ to our existing DNNs while keeping the underlying policy-gradient algorithms unchanged. Across most CNN configurations, \ours\ produced improvements of 5 – 15\%. Notably, when combined with TRPO in the QBE., \ours\ actually yielded a slight performance drop. We believe this is due to sampling bias: the highly imbalanced state–reward distribution makes it difficult to construct \NCRLTerm. From these experiments we observe:
\begin{enumerate}
    \item Consistent performance gains: Across all four environments, ACPG maintains or increases its improvement.
    \item Reduced variability: Most of the standard deviation of ACPG’s results is uniformly lower than the baseline, indicating smoother and more reproducible convergence.
\end{enumerate}
The reward improvement and the reduced number of epochs showcase \ours's enhanced effectiveness and convergence speed.



\begin{table*}[htb]
\centering
\large
\renewcommand{\arraystretch}{1.5}
\begin{adjustbox}{max width=\textwidth,keepaspectratio}
\begin{tabular}{|c|c|cccccccc|cccccccc|}
\hline
\multirow{3}{*}{Alg.} & \multirow{3}{*}{Env.} & \multicolumn{8}{c|}{MLP} & \multicolumn{8}{c|}{CNN} \\ \cline{3-18}
& & \multicolumn{2}{c|}{Best} & \multicolumn{2}{c|}{Final} & \multicolumn{2}{c|}{Stop} & \multicolumn{2}{c|}{Final Std}
& \multicolumn{2}{c|}{Best} & \multicolumn{2}{c|}{Final} & \multicolumn{2}{c|}{Stop} & \multicolumn{2}{c|}{Final Std} \\ \cline{3-18}
& & \multicolumn{1}{c|}{Org.} & \multicolumn{1}{c|}{ACPG} & \multicolumn{1}{c|}{Org.} & \multicolumn{1}{c|}{ACPG} & \multicolumn{1}{c|}{Org.} & \multicolumn{1}{c|}{ACPG} & \multicolumn{1}{c|}{Org.} & ACPG
& \multicolumn{1}{c|}{Org.} & \multicolumn{1}{c|}{ACPG} & \multicolumn{1}{c|}{Org.} & \multicolumn{1}{c|}{ACPG} & \multicolumn{1}{c|}{Org.} & \multicolumn{1}{c|}{ACPG} & \multicolumn{1}{c|}{Org.} & ACPG \\ \hline

\multirow{2}{*}{Rei.} & Car.
& \multicolumn{1}{c|}{500} & \multicolumn{1}{c|}{500}
& \multicolumn{1}{c|}{500} & \multicolumn{1}{c|}{500}
& \multicolumn{1}{c|}{33} & \multicolumn{1}{c|}{\textbf{20 (43\%+)}}
& \multicolumn{1}{c|}{2.47} & \multicolumn{1}{c|}{\textbf{1.54}}
& \multicolumn{1}{c|}{500} & \multicolumn{1}{c|}{500}
& \multicolumn{1}{c|}{500} & \multicolumn{1}{c|}{500}
& \multicolumn{1}{c|}{65} & \multicolumn{1}{c|}{\textbf{15 (76\%+)}}
& \multicolumn{1}{c|}{4.41} & \textbf{1.77} \\ \cline{2-18}

& Rac.
& \multicolumn{1}{c|}{202.25} & \multicolumn{1}{c|}{\textbf{233.16 (15\%+)}}
& \multicolumn{1}{c|}{158.72} & \multicolumn{1}{c|}{\textbf{196.15 (23\%+)}}
& \multicolumn{1}{c|}{100} & \multicolumn{1}{c|}{100}
& \multicolumn{1}{c|}{59.04} & \multicolumn{1}{c|}{32.71}
& \multicolumn{1}{c|}{287.45} & \multicolumn{1}{c|}{\textbf{376.63 (31\%+)}}
& \multicolumn{1}{c|}{188.94} & \multicolumn{1}{c|}{\textbf{284.73 (50\%+)}}
& \multicolumn{1}{c|}{100} & \multicolumn{1}{c|}{100}
& \multicolumn{1}{c|}{104.43} & 117.72 \\ \hline

\multirow{3}{*}{PPO} & End.
& \multicolumn{1}{c|}{941} & \multicolumn{1}{c|}{\textbf{1109 (17\%+)}}
& \multicolumn{1}{c|}{884} & \multicolumn{1}{c|}{\textbf{1002 (13\%+)}}
& \multicolumn{1}{c|}{500} & \multicolumn{1}{c|}{500}
& \multicolumn{1}{c|}{143.23} & \multicolumn{1}{c|}{96.41}
& \multicolumn{1}{c|}{1481} & \multicolumn{1}{c|}{\textbf{1509 (2\%+)}}
& \multicolumn{1}{c|}{1483} & \multicolumn{1}{c|}{\textbf{1563 (5\%+)}}
& \multicolumn{1}{c|}{100} & \multicolumn{1}{c|}{100}
& \multicolumn{1}{c|}{15.43} & 20.28 \\ \cline{2-18}

& Qbe.
& \multicolumn{1}{c|}{8685} & \multicolumn{1}{c|}{\textbf{10431 (20\%+)}}
& \multicolumn{1}{c|}{6747} & \multicolumn{1}{c|}{\textbf{9450 (40\%+)}}
& \multicolumn{1}{c|}{500} & \multicolumn{1}{c|}{500}
& \multicolumn{1}{c|}{3584.31} & \multicolumn{1}{c|}{1497.3}
& \multicolumn{1}{c|}{14852} & \multicolumn{1}{c|}{\textbf{15983 (7\%+)}}
& \multicolumn{1}{c|}{14801} & \multicolumn{1}{c|}{\textbf{15779 (6\%+)}}
& \multicolumn{1}{c|}{100} & \multicolumn{1}{c|}{\textbf{93 (7\%+)}}
& \multicolumn{1}{c|}{103.37} & 91.40 \\ \cline{2-18}

& Pon.
& \multicolumn{1}{c|}{10.9} & \multicolumn{1}{c|}{\textbf{14.7 (26\%+)}}
& \multicolumn{1}{c|}{7.8} & \multicolumn{1}{c|}{\textbf{8.7 (12\%+)}}
& \multicolumn{1}{c|}{500} & \multicolumn{1}{c|}{500}
& \multicolumn{1}{c|}{4.95} & \multicolumn{1}{c|}{4.21}
& \multicolumn{1}{c|}{20.1} & \multicolumn{1}{c|}{\textbf{21.1 (5\%+)}}
& \multicolumn{1}{c|}{19.8} & \multicolumn{1}{c|}{\textbf{20.9 (5\%+)}}
& \multicolumn{1}{c|}{100} & \multicolumn{1}{c|}{\textbf{89 (11\%+)}}
& \multicolumn{1}{c|}{3.41} & 2.28 \\ \hline

\multirow{3}{*}{TRPO} & End.
& \multicolumn{1}{c|}{787} & \multicolumn{1}{c|}{\textbf{1290 (64\%+)}}
& \multicolumn{1}{c|}{421} & \multicolumn{1}{c|}{\textbf{1257 (199\%+)}}
& \multicolumn{1}{c|}{500} & \multicolumn{1}{c|}{500}
& \multicolumn{1}{c|}{--} & \multicolumn{1}{c|}{--}
& \multicolumn{1}{c|}{1243} & \multicolumn{1}{c|}{\textbf{1473 (19\%+)}}
& \multicolumn{1}{c|}{1109} & \multicolumn{1}{c|}{\textbf{1309 (18\%+)}}
& \multicolumn{1}{c|}{100} & \multicolumn{1}{c|}{100}
& \multicolumn{1}{c|}{--} & -- \\ \cline{2-18}

& Qbe.
& \multicolumn{1}{c|}{13170} & \multicolumn{1}{c|}{13007 (-0.01\%)}   
& \multicolumn{1}{c|}{12954} & \multicolumn{1}{c|}{8971 (31\%-)}
& \multicolumn{1}{c|}{500} & \multicolumn{1}{c|}{500}
& \multicolumn{1}{c|}{--} & \multicolumn{1}{c|}{--}
& \multicolumn{1}{c|}{13787} & \multicolumn{1}{c|}{\textbf{14205 (3\%+)}}
& \multicolumn{1}{c|}{13184} & \multicolumn{1}{c|}{12589 (4\%-)}
& \multicolumn{1}{c|}{100} & \multicolumn{1}{c|}{\textbf{98 (2\%+)}}
& \multicolumn{1}{c|}{--} & -- \\ \cline{2-18}

& Pon.
& \multicolumn{1}{c|}{10.4} & \multicolumn{1}{c|}{\textbf{13.5 (30\%+)}}
& \multicolumn{1}{c|}{6.9} & \multicolumn{1}{c|}{\textbf{7.7 (12\%+)}}
& \multicolumn{1}{c|}{500} & \multicolumn{1}{c|}{500}
& \multicolumn{1}{c|}{--} & \multicolumn{1}{c|}{--}
& \multicolumn{1}{c|}{20.1} & \multicolumn{1}{c|}{\textbf{20.9 (4\%+)}}
& \multicolumn{1}{c|}{20.1} & \multicolumn{1}{c|}{\textbf{20.9 (4\%+)}}
& \multicolumn{1}{c|}{99} & \multicolumn{1}{c|}{\textbf{89 (10\%+)}}
& \multicolumn{1}{c|}{--} & -- \\ \hline

\multirow{2}{*}{A3C} & Car.
& \multicolumn{1}{c|}{500} & \multicolumn{1}{c|}{500}
& \multicolumn{1}{c|}{500} & \multicolumn{1}{c|}{500}
& \multicolumn{1}{c|}{100} & \multicolumn{1}{c|}{\textbf{92 (8\%+)}}
& \multicolumn{1}{c|}{0.23} & \multicolumn{1}{c|}{0}
& \multicolumn{1}{c|}{500} & \multicolumn{1}{c|}{500}
& \multicolumn{1}{c|}{500} & \multicolumn{1}{c|}{500}
& \multicolumn{1}{c|}{100} & \multicolumn{1}{c|}{\textbf{43 (57\%+)}}
& \multicolumn{1}{c|}{14.35} & 0 \\ \cline{2-18}

& Rac.
& \multicolumn{1}{c|}{104.38} & \multicolumn{1}{c|}{\textbf{273.22 (161\%+)}}
& \multicolumn{1}{c|}{113.54} & \multicolumn{1}{c|}{\textbf{190.28 (67\%+)}}
& \multicolumn{1}{c|}{100} & \multicolumn{1}{c|}{100}
& \multicolumn{1}{c|}{11.24} & \multicolumn{1}{c|}{113.55}
& \multicolumn{1}{c|}{174.28} & \multicolumn{1}{c|}{\textbf{285.48 (64\%+)}}
& \multicolumn{1}{c|}{191.52} & \multicolumn{1}{c|}{\textbf{241.65 (26\%+)}}
& \multicolumn{1}{c|}{100} & \multicolumn{1}{c|}{100}
& \multicolumn{1}{c|}{58.90} & 44.62 \\ \hline

\end{tabular}
\end{adjustbox}
\vspace{10pt}
\caption{Comparative analysis of \ours\ performance against original algorithm (Org.) in gym. A detailed comparison of the best and final reward metrics achieved by MLP and CNN architectures, with and without \ours\ integration. ``Best'' describes the highest reward during the whole training. ``Final'' is the reward at the last epoch. “Stop” denotes the early-stop epoch at which the reward has stabilized and the predefined performance target has been reached. "Std" describes the standard deviation of the final results over 20 runs with different seeds.}
\label{tb:CNN}
\end{table*}


\noindent\textbf{Performance and Convergence.} 
Tab.\ref{tb:CNN} demonstrates that \ours\ markedly accelerates convergence, enhances robustness, and elevates cumulative rewards. In the more challenging Atari domains, \ours\ outperforms standard policy-gradient baselines by margins of 199\%, 64\%, 19\% and 18\%. In the Car. environments, both \ours\ and the original algorithm attain the maximal reward, reflecting these tasks’ relative simplicity. But thanks to simplicity, we can closely analyze convergence differences in detailed reward-per-step time-series plots comparing \ours\ with the original methods. In the Rac environment, \ours\ achieves up to \textbf{161\%} improvement over original A3C.

\begin{figure*}[!htb]
    \centering
    \includegraphics[width=\linewidth]{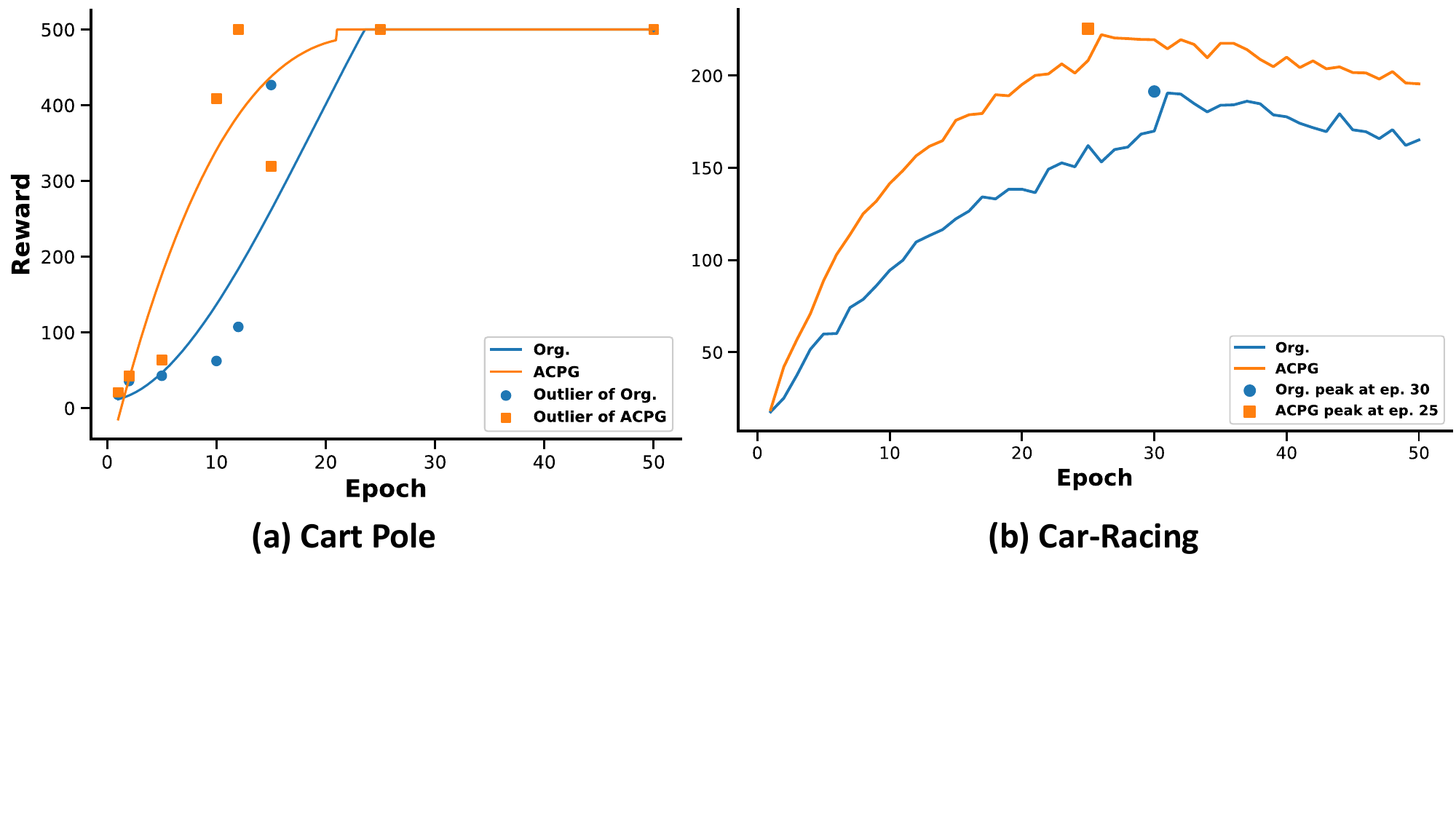}
    \caption{Convergence reward curve of Reinforce (Org.) and \ours-augmented in the Cart Pole (Car.) and Car-Racing (Rac.) Environment. The result is averaged based on multiple experimental runs, each initialized with a distinct random seed. Outliers represent max/min reward achieved.}  
    \label{fig:COV}
\end{figure*}

Fig.\ref{fig:COV} highlights \ours's dramatic acceleration: by epoch 12, \ours\ already reaches a nearly 12$\times$ improvement. Across key milestones, \ours\ consistently delivers more than 8$\times$ higher rewards and converges in far fewer epochs. In the Car-Racing environment, \ours\ similarly outpaces the baseline, achieving faster convergence and higher peak performance. 
\ours’s fixed ETF \headterm\ 
precisely aligns activations with the \NCRLTerm\ targets. By contrast, conventional learnable layers frequently become trapped in local minima and require much longer training to disentangle state–action embeddings. 

The remarkable improvement underscores \ours’s superior balance of exploration and exploitation: by actively probing under‐explored states, it trains the \headterm\ activations $h$ to better emulate the optimal policy. In comparison, baseline methods show pronounced performance volatility—especially in rare or typical states—because they often settle into local minima, which works well for familiar scenarios but leads to suboptimal action choices. 

\begin{table}[!htb]
    \centering
    \small
    \begin{tabular}{c|c|c|c|cc}
        \toprule
        Alg & Env. & Epochs & $\epsilon$ & \makecell[c]{CNN \\ Best } & \makecell[c]{CNN+\ours \\ Best} \\
        \midrule
        \multirow{4}{*}{PPO} & \multirow{4}{*}{Pon.} & \multirow{4}{*}{100} & 0.000 & 3.1 & 2.1 \\
        &&& 0.001 & 7.1 & 7.2 \\
        &&& 0.010 & 19.8 & 20.9 \\
        &&& 0.100 & 18.5 & 21.6 \\
        &&& 1.000 & 1.7 & 3.2 \\
        \bottomrule
    \end{tabular}
    \vspace{10pt}
    \caption{Impact of exploration for \ours\ in Pon. environment. $\epsilon$-greedy is integrated into PPO.}
    \label{tb:UNS}
\end{table}

\noindent\textbf{Affect of Exploration Strategy.} 
To investigate exploration’s impact on \NCRLTerm, we incorporated an $\epsilon$-greedy strategy~\cite{sutton1999policy} into PPO—despite PPO’s own softmax-based exploration. Tab.\ref{tb:UNS} shows that, over 100 epochs, very low $\epsilon$ (i.e., minimal exploration) produces no performance gain when \ours\ is applied. By contrast, as $\epsilon$ increases—especially around \textbf{0.001} — the \ours\ yields significant improvements. We hypothesize that, because \ours\ emulates the optimal policy, it demands adequate exploration to converge; but at $\epsilon$ = 1.0, where sampling covers an excessively broad state distribution, both methods diverge. \ours\ exhibits superior adaptability with higher Epsilon values, ensuring convergence amid a plethora of unseen states and maintaining robustness.


\subsection{Does \NCRLTerm\ exist in \ours?} \label{sec:ANA}
\begin{figure}
    \centering
    \includegraphics[width=\linewidth]{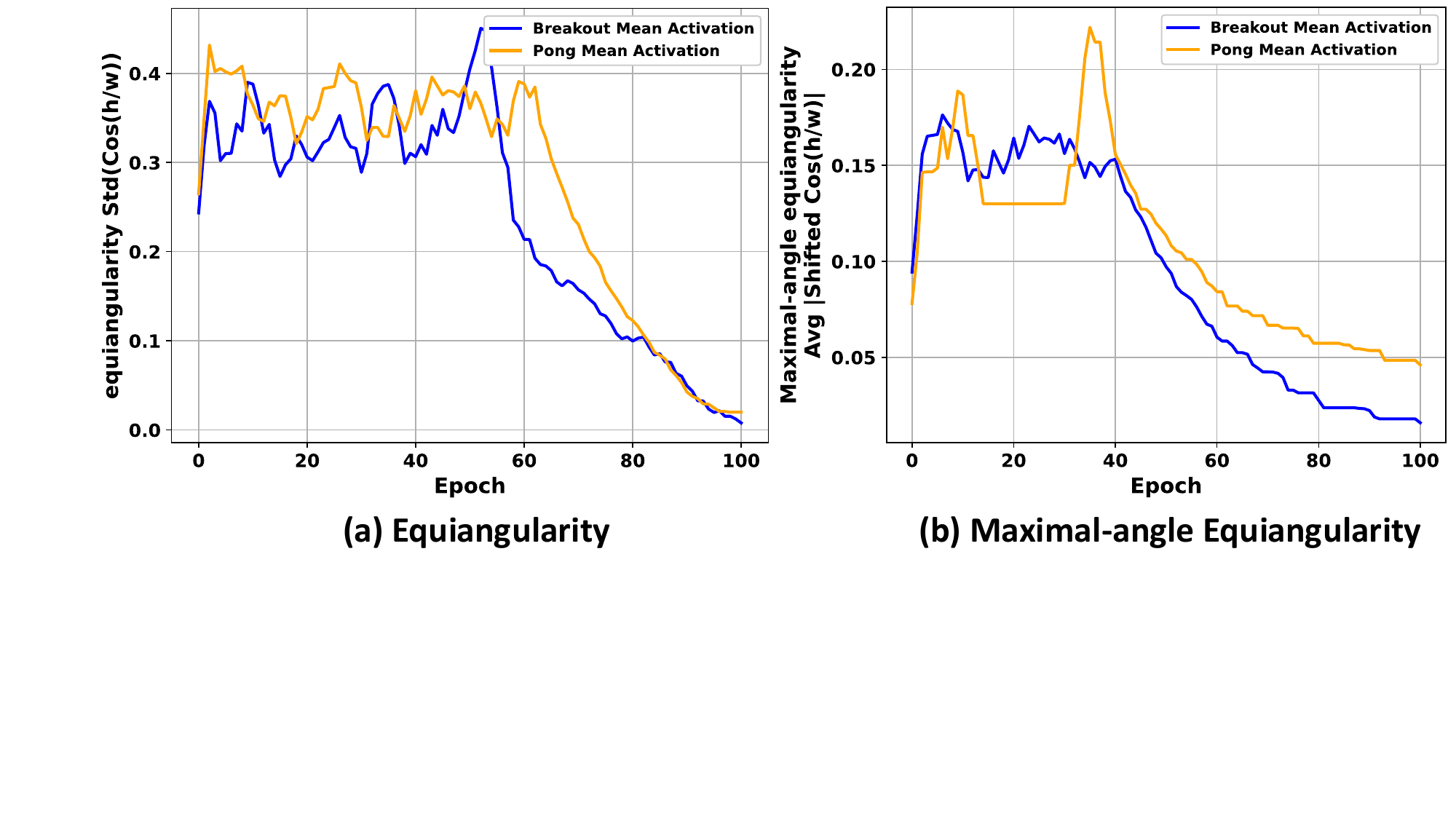}
    \caption{In Breakout and Pong, after applying \ours, mean \headact\ activations approach equiangularity and maximal-angle equiangularity. Y-axis: $\text{Std}_{k,{k'}\neq k}(\cos_{h/w}(k, {k'}))$ and $\text{Avg}_{k,{k'}}|\cos_{h/w}(k, {k'})+ 1/(K-1)|$. Activations from training samples are averaged per epoch.}
    \label{fig:REEVAL}
\end{figure}

In Sec.\ref{sec:EMP}, \NCRLTerm\ does not naturally occur in Breakout and Pong. But under \ours, the mean \headterm\ activations trend towards equiangularity and maximal‐angle equiangularity in Fig.\ref{fig:REEVAL}. 
While a fixed ETF \headterm\ formally defines \NCRLTerm, it may not be the most effective configuration. In practice, settling at a saddle‐point solution—rather than the global optimum—can yield more robust performance.

\section{Related  Work} \label{sec:REL}

\paragraph{Policy Gradient} PG methods have garnered attention for their ability to scale gracefully to large spaces and incorporate deep networks as function approximators~\cite{silver2014deterministic, schulman2015trust}. REINFORCE~\cite{sutton1999policy} relies on estimating returns through Monte-Carlo methods, while Actor-Critic methods~\cite{konda1999actor} approximate returns using another value network. Both of them are on-policy, with corresponding off-policy versions utilizing collected weighted trajectory rewards as returns. Asynchronous Advantage Actor-Critic (A3C)~\cite{mnih2016asynchronous} is an Actor-Critic method that generates sample batches by running multiple actors in parallel. DPG and DDPG~\cite{silver2014deterministic, lillicrap2015continuous} model the policy as a deterministic decision, where the action is deterministically chosen. ~\cite{schulman2015trust}, ~\cite{schulman2017proximal}, ~\cite{rafailov2023direct} proposed TRPO, PPO and DPO methods, which enforce a KL divergence constraint on policy update, helping avoid excessive parameter updates. While these previous PG algorithms study the effect of return function including update frequency, loss function and stability, they ignore the optimal geometric structure when applying DNNs.

\paragraph{Neural Collapse} NC is a phenomenon observed in the classification problem, which inspires us to explore neural network geometric structure in RL.~\cite{papyan2020prevalence} first discovered the NC phenomenon. At the TPT, a classification model trained on a balanced dataset exhibits the collapse of last-layer features into their within-class centers. These centers and classifiers will form a simplex ETF. Due to its elegant geometric property, recent studies prove it with the assumption of last-layer features and the last-layer as independent variables under CE loss~\cite{fang2021exploring, lu2020neural} and MSE loss~\cite{mixon2020neural, han2021neural}. Some studies try to induce neural collapse in imbalanced learning for better accuracy of minority classes~\cite{liu2023inducing, liang2023inducing, yang2022inducing}. Besides, NC has also been observed during LLM training process, which is linked to increasing generalization ~\cite{wu2024linguistic}.  By contrast, we discover the structures of the last-layer feature means and \headterm\ in PG and propose \ours, which allows the policy to imitate the optimal policy.

\textbf{Optimality and Convergence Analysis}
Policy training requires the action distribution to converge either to the global optima or a stationary point~\cite{xiong2022deterministic}. However, understanding of the optimal geometric structure is limited. Theorem 4.2 of ~\cite{zhang2019convergence} establishes the asymptotic convergence of random-horizon PG algorithm. Additionally, \cite{zhang2020global} proposes \textit{Modified Random-Horizon Policy Gradient} (MRPG) algorithm to approximately converge to a second-order stationary point. To ensure convergence, objective functions such as regularizing entropy are widely accepted and proven to be less likely converge to local optima, offering a robust optimization ~\cite{entropy_optimization,regularized}. The deterministic policy can be optimized with a guaranteed stationary solution~\cite{deterministic}, but this usually  converges in a local optimum in practice. When there is a gradient dominance condition, ~\cite{bhandari2019global, bhandari2021linear} studied linear convergence rate for PG methods. Nevertheless, all these works only study how and when the agent model can converge to the stationary point or global optimal, but ignore the optimal geometric structure of model in TPT.

\textbf{Exploration and Exploitation}
Several theoretical analyses in reinforcement learning aim to comprehend how to explore and exploit environmental data for improving model training. Optimism in the Face of Uncertainty (OFU) is a method based on confidence~\cite{strehl2005theoretical, regretbound_analysis, fix-horizon} or bonus~\cite{strehl2006pac, jin2018q}. While they have strong theoretical analysis and guarantees, applying them in deep RL presents challenges due to the need for preserving visits for each state in a tabular way. Specifically to PG, theoretical analysis of entropy maximization indicates increasing action entropy only promotes undirected exploration. Therefore, it is suggested to balance the random choice and the output of policy model to encourage more exploration while maintaining exploiting existing network.

Other related works - \textbf{Reinforcement Learning in Large Language Model} 
is included in appendix~\ref{app:RELATED}
\section{Conclusion} \label{sec:CONC}
In this paper, we explore the \ncrlTerm\ for RL PG methods. To the best of our knowledge, this is the first work to investigate the geometric properties of optimal policy DNNs. We begin by empirically observing the \ncrlTerm\ phenomenon and subsequently extend our analysis in real and complex RL environment. In real-world settings, where $\mathcal{S}$ is not fully explored and certain conditions in empirical experiments are not satisfied, 
the \ncrlTerm\ is disrupted. To approach \ncrlTerm, we introduce a synthetic fixed simplex ETF as \headterm. We conclude that our simplified practice even helps to improve the performance of PG methods with no cost. We anticipate that our findings will inspire future theoretical investigations into RL's optimal policy. The limitation of this study is its analysis of \ncrlTerm\ predominantly conducted within certain conditions. Future research should aim to investigate more intricate RL DNNs, more complex environments, and PG loss functions that are more conducive to facilitating \ncrlTerm. 

\newpage
{
    \small
    \bibliographystyle{iclr2026_conference}
    \bibliography{main}
}

\newpage
\appendix
\begin{center}
	\Large \textbf{Imitate Optimal Policy: Prevail and Induce Action Collapse in Policy Gradient\\ \vspace{8pt} \textit{Supplementary Material}}
\end{center}

\section{Neural Collapse Phenomenon} \label{app:NC}

Then the neural collapse (NC) phenomenon can be formally described as ~\cite{papyan2020prevalence}:

\textbf{(NC1)} Collapse of within-class variability: $\Sigma_\mathbf{W}\rightarrow\mathbf{0}$, and $\Sigma_\mathbf{W}:=\mathrm{Avg}_{i,k}\{(h_{k,i}-h_k)(h_{k,i}-h_k)^T\}$, where $h_{k,i}$ is the last-layer feature of the $i$-th sample in the $k$-th class, and $h_k=\mathrm{Avg}_{i}\{h_{k,i}\}$ is the within-class mean of the last-layer features in the $k$-th class;

\textbf{(NC2)} Convergence to a simplex ETF: $\tilde{h}_k = (h_k-h_G)/||h_k-h_G||, k\in[1,K]$, satisfies Eq. (\ref{eq:MIMJ}), where  $h_G$ is the global mean of the last-layer features, \emph{i.e.,} $h_G=\mathrm{Avg}_{i,k}\{h_{k,i}\}$;

\textbf{(NC3)} Self duality: $\tilde{h}_k=w_k/||w_k||$, where $w_k$ is the classifier vector of the $k$-th class;

\textbf{(NC4)} Simplification to the nearest class center prediction: $argmax_k\langle h, w_k\rangle=argmin_k||h-h_k||$, where $h$ is the last-layer feature of a sample to predict for classification.

\section{Proof of Theorem 1} \label{app:PROOF}
We consider the optimization problem:

\begin{equation}
    \max_{\mathbf{H}} J(\mathbf{H})
\end{equation}

where $\mathbf{H} = \{h_{s_k}:s \in \mathcal{S}\}$ represents the activation of states with optimal action $a_k$. Expanding the policy gradient reward function:

\begin{equation}
    \max_{\mathbf{H}}\sum_{k=1}^K\sum_{s_k} d^{\pi}(s_k) \cdot \log \frac{e^{h_{s_k}^T w_{k}}}{\sum_{j=1}^K e^{h_{s_k}^T w_j}}\Psi^\pi
\end{equation}

Rewriting in terms of \headact activations distributions, where we separate the states $s$ into their corresponding optimal action $a_k$:

\begin{equation}
    \max_{\mathbf{H}} \sum_{k=1}^{K} \sum_{i=1}^{n_k} d^{\pi}(s) \cdot\log \frac{e^{h_{i,k}^T w_k}}{\sum_{j=1}^K e^{h_{i,k}^T w_j}}\Psi^\pi
\end{equation}

where $h_{i,k}$ represents the \headact\ activations of the $i^{th}$ state in class $\mathcal{S}_k$, and $n_k = |\mathcal{S}_k|$ represents the cardinality (size) of class $\mathcal{S}_k$. Rearranging (by dividing $e^{h_{i,k}^T w_k}$ from both numerator and denominator):

\begin{equation}
    \max_{\mathbf{H}} \sum_{k=1}^{K} \sum_{i=1}^{n_k} d^{\pi}(s) \cdot \log \frac{1}{1 + \sum_{j \neq k} e^{h_{i,k}^T (w_j - w_k)}}\Psi^\pi
\end{equation}

which is equivalent to minimizing:

\begin{equation}
    \min_{\mathbf{H}} \sum_{j \neq k} e^{h_{i,k}^T (w_j - w_k)}
\end{equation}

where we only focus on optimizing the value of $\mathbf{H} = \{h_{i,k}\}$. Thus, we only maintain the $i^{th}$ sample in class $k$, and write $h_{i,k}$ as $h$ for simplicity. Note that for continuous state space, this formula also holds, since we use integral instead of sum to overwrite $J(\mathbf{H})$. Then we rewrite to:

\begin{equation}
    \min_{h} \sum_{j \neq k} e^{h^T (w_j - w_k)}
\end{equation}

Define the function:

\begin{equation}
    f(h) = \sum_{j \neq k} e^{h^T (w_j - w_k)}
\end{equation}

subject to the constraint:

\begin{equation}
    g(h) = \| h \|^2 - E_\mathbf{H} \leq 0
\end{equation}

Construct the Lagrangian:

\begin{equation}
    \mathcal{L}(h, \lambda) = f(h) + \lambda g(h)
\end{equation}

Using Karush-Kuhn-Tucker (KKT) conditions, we require:

\begin{equation}
    \frac{\partial}{\partial h} \mathcal{L}(h, \lambda) = \sum_{j \neq k} (w_j - w_k) e^{h^T (w_j - w_k)} + 2 \lambda h = 0
\end{equation}

The KKT conditions require:

\begin{equation}
    \lambda g(h) = 0, \quad g(h) \leq 0, \quad \lambda \geq 0
\end{equation}

Note both $f(h)$ and $g(h)$ are convex functions, since they both have non-negative second derivatives (which can be easily verified by the second derivative of $(e^{h^T(w_j-w_k)})'' = (w_j-w_k)^2e^{h^T(w_j-w_k)} \geq 0$ in $f(h)$ and $(h^2)'' = 2 > 0$ in $g(h)$). Since KKT conditions are both necessary and sufficient for convex problems , we proceed with:

\begin{lemma}
    $\sum_{k=1}^{K} w_k = 0$.
\end{lemma}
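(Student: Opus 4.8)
The statement to prove is the small lemma $\sum_{k=1}^{K} w_k = 0$, where the $w_k$ are the vertices of the fixed simplex ETF defined in Eq.~\eqref{eq:LPNOWCON} (equivalently the columns of $\mathbf{W}^*$, which has the ETF form of Eq.~\eqref{eq:ETFM} scaled by $\sqrt{E_\mathbf{W}}$).

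\begin{proof}
By construction the \headterm\ is a (scaled) simplex ETF: writing $\mathbf{W}^* = [w_1,\dots,w_K] \in \mathbb{R}^{d\times K}$, we have from Eq.~\eqref{eq:ETFM}
\begin{equation}
\mathbf{W}^* = \sqrt{E_\mathbf{W}}\,\sqrt{\tfrac{K}{K-1}}\;\mathbf{U}\left(\mathbf{I}_K - \tfrac{1}{K}\mathbf{1}_K\mathbf{1}_K^T\right),
\end{equation}
where $\mathbf{U}\in\mathbb{R}^{d\times K}$ satisfies $\mathbf{U}^T\mathbf{U} = \mathbf{I}_K$. Note that $\sum_{k=1}^K w_k = \mathbf{W}^*\mathbf{1}_K$. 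Therefore
\begin{equation}
\sum_{k=1}^K w_k = \sqrt{E_\mathbf{W}}\,\sqrt{\tfrac{K}{K-1}}\;\mathbf{U}\left(\mathbf{I}_K - \tfrac{1}{K}\mathbf{1}_K\mathbf{1}_K^T\right)\mathbf{1}_K = \sqrt{E_\mathbf{W}}\,\sqrt{\tfrac{K}{K-1}}\;\mathbf{U}\left(\mathbf{1}_K - \tfrac{1}{K}\mathbf{1}_K(\mathbf{1}_K^T\mathbf{1}_K)\right).
\end{equation}
Since $\mathbf{1}_K^T\mathbf{1}_K = K$, the bracketed term is $\mathbf{1}_K - \tfrac{1}{K}\cdot K\cdot\mathbf{1}_K = \mathbf{1}_K - \mathbf{1}_K = \mathbf{0}$, hence $\sum_{k=1}^K w_k = \mathbf{0}$.

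Alternatively, one may verify this from the inner-product characterization in Eq.~\eqref{eq:LPNOWCON} alone, without invoking the explicit factorization. Compute
\begin{equation}
\left\|\sum_{k=1}^K w_k\right\|^2 = \sum_{k=1}^K\sum_{k'=1}^K {w_k}^T w_{k'} = E_\mathbf{W}\sum_{k=1}^K\sum_{k'=1}^K\left(\tfrac{K}{K-1}\delta_{k,k'} - \tfrac{1}{K-1}\right) = E_\mathbf{W}\left(\tfrac{K}{K-1}\cdot K - \tfrac{1}{K-1}\cdot K^2\right) = 0,
\end{equation}
and a vector of zero norm is the zero vector.
\end{proof}

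The argument is entirely elementary — it is a direct consequence of the ETF definition, and the only ``obstacle'' is bookkeeping: making sure the same scaling convention ($\ell_2$ norm $\sqrt{E_\mathbf{W}}$ per column) is used consistently here and in the subsequent KKT analysis. This lemma is then used in the remainder of the proof of Theorem~\ref{the:ETF} to simplify the stationarity condition $\sum_{j\neq k}(w_j - w_k)e^{h^T(w_j-w_k)} + 2\lambda h = 0$: since $\sum_{j\neq k}(w_j - w_k) = \left(\sum_{j=1}^K w_j\right) - K w_k = -K w_k$, the ``balanced'' contribution of the off-diagonal weights collapses onto the single direction $w_k$, which is what ultimately forces $h^*$ to align with $w_k$ regardless of the sizes $n_k$ or the distribution $d^\pi$.
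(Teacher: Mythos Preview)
Your proof is correct. Both of your arguments are valid and, in fact, cleaner than the paper's own. The paper proceeds by writing down the Gram matrix $G=\mathbf{W}^T\mathbf{W}$, observing that $G\mathbf{1}=0$, and then invoking a rank argument (restricting to the tight $(K-1)\times K$ case so that $\mathbf{W}^T$ has trivial null space) to pass from $\mathbf{W}^T(\mathbf{W}\mathbf{1})=0$ to $\mathbf{W}\mathbf{1}=0$. Your first route --- multiplying the explicit factorization in Eq.~\eqref{eq:ETFM} by $\mathbf{1}_K$ --- bypasses the Gram matrix entirely and is a one-line computation. Your second route is closer in spirit to the paper's (it is essentially $\mathbf{1}^T G\mathbf{1}=0$), but by interpreting the double sum as $\|\sum_k w_k\|^2$ you avoid the rank/null-space detour altogether. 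Either way, the conclusion and its downstream use in the KKT stationarity condition are exactly as in the paper.
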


\begin{proof}
    The Gram matrix for a set of simplex ETF:
    \begin{equation}
        G_{ij} = \begin{cases} E_\mathbf{W}, & i=j \\ -\frac{E_\mathbf{W}}{K-1}, & i \neq j \end{cases}
    \end{equation}
    It can be verified that Gram matrix $G$ has all-ones vector $\mathbf{1}$ as an 0-eigenvector since:
    \begin{equation}
        G \mathbf{1} = E_\mathbf{W} + (K-1)(-\frac{E_\mathbf{W}}{K-1}) = 0
    \end{equation}
    Substituting $G=\mathbf{W}^T\mathbf{W}$ following the definition of simplex ETF we have: 
    \begin{equation}
        G \mathbf{1} = 0 \Rightarrow \mathbf{W}^T \mathbf{W}\mathbf{1} = 0
    \end{equation}
    This can be simplified from how we construct the ETF matrix $\mathbf{W}$. We assume $\mathbf{W}\in R^{d \times K}$, and for any activation dimension $d > K-1$, we add 0 to the row greater than $K-1$ and we keep the elements in first $K-1$ rows in each column to be the tightest simplex ETF (i.e. $K$ simplex ETF vectors in a $K-1$ space). Therefore, proving the sum of \headterm\ to be $0$-vector needs only to be considered in the case where $\mathbf{W}\in R^{(K-1) \times K}$ is in the tightest shape, since the other components with row index greater than $K-1$ are set to $0$. 
    \\
    Therefore, to solve the equation $\mathbf{W}^Ta=0$ for some vector $a\in R^{K-1}$, we have: $a=0$
    since the vector $\mathbf{W}^T$ have full rank in its column, indicating it has only $0$-vector in its null space. This indicates that:
    \begin{equation}
        \mathbf{W}^T \mathbf{W}\mathbf{1} = 0 \Rightarrow \mathbf{W}\mathbf{1} = 0
    \end{equation}
The above equation simplifies to $\sum_{j=1}^{K} w_j = 0$.
    
\end{proof}

\begin{proposition}
    KKT holds based on the condition: 
    \begin{equation}
        h = \sqrt{\frac{E_\mathbf{H}}{E_\mathbf{W}}} w_k,
    \end{equation}
\end{proposition}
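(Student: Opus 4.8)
The plan is to verify the four KKT conditions directly at the candidate point $h=\sqrt{E_\mathbf{H}/E_\mathbf{W}}\,w_k$, writing $c:=\sqrt{E_\mathbf{H}/E_\mathbf{W}}$, and then invoke the convexity of $f$ and $g$ already noted in the excerpt to conclude that this point is in fact a global minimizer of the reduced single-sample problem $\min_h f(h)=\sum_{j\neq k}e^{h^T(w_j-w_k)}$. First I would check primal feasibility: since $\|w_k\|^2=E_\mathbf{W}$ by Eq.\eqref{eq:LPNOWCON}, we get $\|h\|^2=c^2E_\mathbf{W}=E_\mathbf{H}$, so $g(h)=0$. The constraint is therefore active, complementary slackness $\lambda g(h)=0$ holds automatically for any multiplier, and all that remains is to exhibit a $\lambda\ge 0$ satisfying the stationarity equation.

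The key simplification comes from the ETF geometry. Substituting $h=c\,w_k$ into the exponents and applying Eq.\eqref{eq:LPNOWCON}, for every $j\neq k$ we get $h^T(w_j-w_k)=c\big(w_k^Tw_j-w_k^Tw_k\big)=-c\,E_\mathbf{W}\,\tfrac{K}{K-1}$, which is the \emph{same} number for all $j\neq k$. Denote $\beta:=\exp\!\big(-c\,E_\mathbf{W}\,\tfrac{K}{K-1}\big)>0$. Then the stationarity condition $\sum_{j\neq k}(w_j-w_k)e^{h^T(w_j-w_k)}+2\lambda h=0$ collapses to $\beta\sum_{j\neq k}(w_j-w_k)+2\lambda c\,w_k=0$. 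Using Lemma 1 ($\sum_{j=1}^K w_j=0$), one has $\sum_{j\neq k}(w_j-w_k)=\big(\sum_{j=1}^K w_j\big)-w_k-(K-1)w_k=-K w_k$, so the equation becomes $(2\lambda c-\beta K)\,w_k=0$. Since $w_k\neq 0$, this forces $\lambda=\beta K/(2c)$, which is strictly positive because $\beta,K,c>0$, so dual feasibility $\lambda\ge 0$ holds as well.

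Having verified stationarity, primal feasibility, dual feasibility, and complementary slackness, and since $f$ and $g$ are convex (so KKT is sufficient), $h=c\,w_k$ is a global minimizer of the reduced problem. Because the objective in Eq.\eqref{eq:LPNOW} decouples over samples — the reduction carried out earlier in the appendix — the same conclusion applies to each $h_{i,k}$, giving $h_{s_k}^*=\sqrt{E_\mathbf{H}/E_\mathbf{W}}\,w_k^*$; feeding this back into Eq.\eqref{eq:LPNOWCON} yields ${h_{s_k}^*}^T w_{k'}^*=\sqrt{E_\mathbf{H}E_\mathbf{W}}\big(\tfrac{K}{K-1}\delta_{k,k'}-\tfrac{1}{K-1}\big)$, exactly the statement of Theorem~\ref{the:ETF}.

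The only genuine content sits in two structural facts about the fixed ETF: that all off-diagonal exponents $h^T(w_j-w_k)$ coincide at the candidate point, so the weighted gradient sum degenerates into an unweighted one, and that $\sum_j w_j=0$, which together reduce the vector stationarity equation to a scalar one solvable for $\lambda$. Everything else is routine substitution; the point to be careful about is the positivity bookkeeping that guarantees $\lambda>0$. I do not expect a serious obstacle here — the mild subtlety, not needed for the proposition as stated but relevant if one wants to characterize \emph{all} minimizers of Eq.\eqref{eq:LPNOW}, is that $f$ restricted to the constraint sphere is strictly convex in the relevant directions, which would upgrade "a global minimizer" to "the unique global minimizer."
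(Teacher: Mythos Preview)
Your proposal is correct and follows essentially the same route as the paper: verify primal feasibility $g(h)=0$, observe that the ETF geometry makes all exponents $h^T(w_j-w_k)$ equal so the stationarity sum factors, apply Lemma~1 to collapse $\sum_{j\neq k}(w_j-w_k)=-Kw_k$, solve for a positive $\lambda$, and invoke convexity for sufficiency. Your presentation is in fact slightly more systematic than the paper's in separating out the four KKT conditions, but the argument is identical.
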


\begin{proof}
\begin{equation}
    \sum_{j \neq k} (w_j - w_k) e^{w^T (w_j - w_k)} + 2 \lambda h = 0
\end{equation}

We define:

\begin{equation}
    A = e^{\sqrt{\frac{E_\mathbf{H}}{E_\mathbf{W}}} w_k^T (w_j - w_k)}
\end{equation}

where A is a constant independent of $j$ since $ w_k^Tw_{k'}=-\frac{E_\mathbf{W}}{K-1} $ if $k\ne k'$ and $E_\mathbf{W}$ otherwise. And we rewrite the formula as:

\begin{equation}
    A \sum_{j \neq k} (w_j - w_k) + 2 \lambda h = 0.
\end{equation}

Rearranging and substituting $\sum_{j=1}^{K} w_j = 0$ from the Lemma and condition $h = \sqrt{\frac{E_\mathbf{H}}{E_\mathbf{W}}} w_k$:

\begin{equation}
    2 \lambda \sqrt{\frac{E_\mathbf{H}}{E_\mathbf{W}}} w_k = A K w_k,
\end{equation}

where we pick the optimal $\lambda$ to be:

\begin{equation}
    \lambda = \frac{1}{2} \sqrt{\frac{E_\mathbf{W}}{E_\mathbf{H}}} A K > 0.
\end{equation}

The norm constraint is active, indicating $g(h) \leq 0$ since:
\begin{equation}
    g(h) = \| h \|^2 - E_\mathbf{H} = \frac{E_\mathbf{H}}{E_\mathbf{W}} \| w_k \|^2 - E_\mathbf{W} = 0,
\end{equation}

also \( \lambda = \frac{1}{2} \sqrt{\frac{E_\mathbf{W}}{E_\mathbf{H}}} A K > 0 \), all KKT conditions are satisfied, proving:

\begin{equation}
    h = \sqrt{\frac{E_\mathbf{H}}{E_\mathbf{W}}} w_k.
\end{equation}

is the optimal solution (from our previous argument, in convex case, satisfying KKT condition is necessary and sufficient to show optimal solution of $h$ and $\lambda$).

\end{proof}
The proposition above satisfies the condition we aim to prove in the theorem:

\begin{equation}
    h^T w_{k'} = \sqrt{E_\mathbf{H} E_\mathbf{W}} \left( \frac{K}{K-1} \delta_{k,k'} - \frac{1}{K-1} \right).
\end{equation}

where $h = h_{i,k}$ or $h_{s_k}$ represents the activation of a state in $\mathcal{S}_k$

\qed

Note: Notable differences exist between the loss function in Eq.\eqref{eq:RLO} and Eq.\eqref{eq:PGLOSS}. The key difference is, Eq.\eqref{eq:RLO} follows the state value function as the loss function, which includes all the action choices in the formula. By contrast, Eq.\eqref{eq:PGLOSS} leverages PG, which solely focusing on optimizing the policy probability. Both formulations can serve as valid loss functions for policy training. In fact, the PG formulation is derived from the value function using Monte Carlo estimation, where the optimal action choice acts as an unbiased estimator for the expected value. In this paper, PG is used as the loss function in reaching the Theorem, and value function is expanded to show the phenomenon that activations will be attracted to align with the simplex ETF structure.

\section{Experiments Details} \label{app:EXP}
All experiments were conducted on servers equipped with NVIDIA H100 80 GB GPUs paired with dual Intel Xeon Platinum 8462Y+ processors (2 × 32-core sockets, 64 cores total) and approximately 2 TB of RAM. \\ 
All hyper-parameters are shown as below:
\begin{itemize}
    \item The MLP Classic Control / box2d: MLP hidden sizes: [64, 64], Batch size: 64, Buffer size: 20000, Activation: ReLU, Output: Softmax for policy.
    \item The MLP Atari: Hidden sizes: [N, 256, 128, 256, N] (N is based on the feature dimension after CNN process image), Activation: ReLU, Output: Linear for Q-values.
    \item The CNN: Conv layers: [256->128->32->64->64] channels, Kernel sizes: [8->4->3], Strides: [4->2->1], Hidden size: [512], Then following MLP above.
    \item Ideal Cliff Walking: Epochs: 100/500, Step per epoch: 1000, Reward threshold: 195.
    \item Car-racing: Epochs: 100/500, Step per epoch: 100000, Step per collect: 1000, Repeat per collect: 4, Training num: 10, Test num: 10.
    \item Break-out: Epochs: 100/500, Step per epoch: 100000, Step per collect: 1000, Repeat per collect: 4, Training num: 10, Test num: 10.
    \item End.: Epochs: 100/500, Step per epoch: 100000, Step per collect: 1000, Repeat per collect: 4, Training num: 10, Test num: 10
    \item Qbe.: Epochs: 100/500, Step per epoch: 100000, Step per collect: 1000, Repeat per collect: 4, Training num: 10, Test num: 10
    \item Pon.: Epochs: 100/500, Step per epoch: 100000, Step per collect: 1000, Repeat per collect: 4, Training num: 10, Test num: 10
    \item Reinforce hyper-parameters: Learning rate: (1e-3, 1e-4, 1e-5), Gamma: 0.95, Batch size: 64, Buffer size: 20000, Training episodes per collect: 8, Repeat per collect: 2
    \item PPO hyper-parameters: Learning rate: (8e-5, 2.5e-4, 8e-4) Gamma: 0.99, GAE lambda: 0.95, Clip epsilon: 0.1, Value function coefficient: 0.25, Entropy coefficient: 0.01, Max gradient norm: 0.5, Batch size: 256, Buffer size: 100000, Training episodes per collect: 10, Repeat per collect: 4
    \item TRPO hyper-parameters: Learning rate: (8e-5, 2.5e-4, 8e-4),Gamma: 0.99,Max KL divergence: 0.01, Value function coefficient: 0.5, Entropy coefficient: 0.01, Batch size: 256, Buffer size: 100000, Training episodes per collect: 10, Repeat per collect: 4
    \item Batch size: 64 (classic control) / 256 (Atari) over all environments.
\end{itemize}
All experimental results are average over \textbf{20} random seeds and we choose the best from three learning rates.



\section{Related Work} \label{app:RELATED}
\textbf{Reinforcement Learning in Large Language Model}
RL, especially RLHF, has been widely applied to train LLM, where RL favors the process of improving the efficiency of human feedback. Human values and preferences are used in improving the reward function accuracy, where a latent variable framework can be introduced for better encoding ~\cite{sriyash2024}. Even the need of explicit reward function can be replaced by preference data using a classification loss, with the help of \textit{direct preference optimization} (DPO) ~\cite{rafailov2023direct}. Also, safety issue in LLM training and development is currently more and more important, which can be formalized and included in reward function to improve helpfulness and harmlessness, utilizing existing methods in RL to favor the optimization ~\cite{dai2024safe}. Harnessing the advantage of PG, methods such as \textit{contrastive policy gradient} (CoPG) are applied in fine-tuning and output human preferences more effectively ~\cite{flet2024} . These examples demonstrate the importance of RL in LLM training, which indicates the potential application of our ACPG method.

\section{Limitation and Societal Impacts}
\label{app:LIMIT}
The potential limitations of our study may include:
\begin{itemize}
    \item Our supposition of \ncrlTerm\ has only been shown to happen in ideal environments, while others have not yet been tested. We conjecture that if the environment becomes more complex, with more random transition and sparse rewards, this phenomenon may be difficult to observe or even does not exist. Furthermore, what does geometric structure characterize at a saddle (sub-optimal) point?
    \item Theorem 1~\ref{the:ETF} states our goal is optimized when last layer activations match with their preassigned~\headterm, but have not yet been tested whether arbitrary states from the same optimal-action class will finally collapse to corresponding vector in TPT.
    \item Although experiments have demonstrated better adaptability in Sec.\ref{sec:RES} by comparing PPO with fixed \headterm\ or not, other PG methods has not yet been fully tested by applying fixed \headterm. The impact of construction of sampling RL dataset also warrants further investigation.
\end{itemize}

\noindent\textbf{Societal Impacts} Our work introduces \ncrlTerm\ Policy Gradient (ACPG), a broadly applicable paradigm for training discrete-policy RL networks by fixing the \headterm\ to a simplex ETF structure. This approach offers theoretical insights into optimal policy geometry and yields practical benefits—namely faster convergence and more stable learning across a wide range of Gym environments—without altering the core interaction loop or data requirements of existing PG methods. Because ACPG only replaces the \headterm\ with a non‐learnable ETF, it often lowers overall computation and energy usage compared with conventional learnable \headterm, imposing no extra environmental burden. Beyond these efficiency gains, our method does not introduce new concerns in areas such as privacy, public health, fairness, or other societal domains: it remains fully compatible with standard RL safety and ethics frameworks and does not affect the downstream interpretability or deployment of learned policies.


\end{document}